\documentclass[letterpaper,10pt,conference]{ieeeconf}

\IEEEoverridecommandlockouts
\usepackage{graphicx}

\usepackage{newtxtext}
\usepackage{newtxmath}

\usepackage{dsfont}
\usepackage{bm}
\usepackage{textcomp}
\usepackage{cite}

\usepackage{subcaption}
\usepackage{stfloats}
\usepackage{multirow}
\usepackage{soul}
\usepackage{makecell}
\usepackage{float}

\usepackage{algorithm}
\usepackage{algpseudocode}

\usepackage{verbatim}
\usepackage[table]{xcolor}

\usepackage{hyperref}
\usepackage{booktabs}

\algnewcommand\algorithmicinput{\textbf{\quad Input:}}
\algnewcommand\INPUT{\item[\algorithmicinput]}

\algnewcommand\algorithmicfind{\textbf{\qquad Find}}
\algnewcommand\FIND{\item[\algorithmicfind]}

\newtheorem{theorem}{Theorem}

\newtheorem{assumption}{Assumption}
\newtheorem{remark}{Remark}

\renewcommand{\eqref}[1]{Eqn.~(\ref{#1})}

\title{\LARGE \bf
Safe Multi-Robot Collaborative Transport Using Density Functions
}

\author{
Jagannath Prasad Sahoo,
Sriram S. K. S. Narayanan,
and Umesh Vaidya
\thanks{ 
Jagannath Prasad Sahoo is with the Center for Artificial Intelligence and
Robotics, Indian Institute of Technology Mandi, India
{\tt\small email: s23118@students.iitmandi.ac.in}.
Sriram S. K. S. Narayanan and Umesh Vaidya are with the Department of
Mechanical Engineering, Clemson University, Clemson, SC 29630, USA
{\tt\small email: sriramk@clemson.edu, uvaidya@clemson.edu}.
}
}

\begin{document}

\maketitle
\thispagestyle{empty}
\pagestyle{empty}


\begin{abstract}
This paper presents a hierarchical density-based model predictive control
framework for safe collaborative manipulation by multiple quadrupedal robots.
The framework enables a team of robots to push a shared object to a desired pose
using only the initial and goal poses, without requiring a precomputed reference
trajectory. A centralized box-level MPC optimizes contact forces while
enforcing a control-density constraint for goal convergence and obstacle
avoidance. Each robot then solves its own distributed robot-level
whole-body MPC, under a stated shared-information assumption, to track
its moving contact location while accounting for static obstacles and the
time-varying positions of neighboring robots. The
approach is evaluated in MuJoCo using whole-body contact dynamics for two and
three Unitree Go2 quadrupeds collaboratively pushing rigid objects through
narrow passages. Comparisons with matched Control Barrier Function and
RRT* based tracking baselines demonstrate the effectiveness of the proposed
density-based formulation for push-only, force- and torque-coupled
manipulation tasks. Implementation videos are available at
\url{https://jaggu2606.github.io/go2-density-mpc-pushing/}.
\end{abstract}


\section{Introduction}\label{sec:intro}


Quadrupedal teams can support exploration, inspection, and search-and-rescue
\cite{chung2023depths,penaqueralta2020sar}, where coordinated,
collision-free motion is essential in cluttered environments
\cite{oh2015survey,hsu2024safetyfilter}. Tasks such as debris removal
and material transport may also require teams to manipulate objects
beyond a single robot's capacity. Collaborative
pushing requires the robots to coordinate their contact forces while
avoiding collisions with one another, the object, and surrounding
obstacles \cite{sombolestan2023hierarchical}. This is especially
challenging in push-only manipulation, where the contact forces couple
the object's translation and rotation.

Collaborative pushing has been approached through planning, optimization-based
control, and learned coordination. Sampling-based planners such as RRT*
generate collision-free paths, but these plans must still be converted into
feasible object motions and robot actions
\cite{karaman2011sampling}. For contact-rich loco-manipulation, a hierarchical
MPC for a single quadruped jointly optimizes contact forces and configuration
while coordinating with whole-body locomotion control, using a prescribed
object trajectory \cite{rigo2023contact}. Related methods address uncertain
object mass and friction through adaptive force regulation
\cite{sombolestan2023hierarchical}, and unmodeled payloads through
CLF-constrained MPC \cite{minniti2022adaptive}. At the team level,
hierarchical multi-agent reinforcement learning has been used to coordinate
quadruped pushing through high-level subgoals and decentralized execution
\cite{feng2025learning}, and collaborative pushing scenarios have been
included in multi-agent quadruped benchmarks \cite{xiong2024mqe}. These approaches demonstrate coordinated pushing, but do not provide
formal guarantees of collision safety for the robots and shared object
during execution.

Formal safety constraints have also been incorporated into robot control.
Control barrier functions provide conditions for forward invariance of safe
sets and have been used in MPC \cite{zeng2021mpccbf}; distributed
barrier-based controllers further address safety and convergence in
multi-agent navigation \cite{mestres2024distributed}. These methods offer
formal guarantees under their stated assumptions, but do not address the
coupled problem of optimizing multiple push-only contact forces on a shared
object while coordinating robot-level motion around moving contact targets
and neighboring robots.

Density functions provide a complementary dual-space formulation
of safety and convergence, based on conditions on the evolution of
trajectory density \cite{rantzer2004safe}. Related ideas have been applied
to formation stabilization \cite{dimarogonas2007application}, safe
navigation and data-driven planning
\cite{loizou2008density,vaidya2018optimal,yu2022datadriven}, and learned
transfer-operator models for off-road and legged locomotion
\cite{moyalan2023convex,moyalan2023offroad}. Control density functions have been used to formulate convex and robust
safety constraints for quadratic-program controllers
\cite{moyalan2024synthesizing, moyalan2024robustcdf}, and have been applied to safe motion planning and control of robotic systems \cite{zheng2023safe,narayanan2023safe,narayanan2025density}.
These formulations are primarily developed for continuous-time systems.
More recently, control density functions have been incorporated into MPC
with tunable safety margins \cite{narayanan2025mpccdf}. However, existing density-based safety methods have focused primarily on
single-agent systems and do not directly address predictive safety in multi-agent settings.


This paper presents a hierarchical density-constrained MPC framework for
collaborative manipulation by multiple quadrupedal robots, consisting of a
single centralized box-level MPC coupled with $N$ distributed robot-level
MPCs. A box-level MPC
selects the contact forces that drive the shared object toward a desired pose
while accounting for static obstacles. Each robot then solves its own
robot-level whole-body MPC locally to track
its moving contact location and enforce additional safety constraints with
respect to obstacles and neighboring robots, under the shared-information
assumption stated in Section~\ref{sec:prelims}. The main contributions are:
{\color{black}
\begin{itemize}
    \item A hierarchical multi-agent MPC framework for collaborative
    pushing that couples a centralized object-level MPC for push-only
    contact-force optimization with distributed robot-level MPCs for
    tracking moving contact locations, enabling goal-directed transport
    without a prescribed object trajectory.

    \item A discrete-time safety constraint based on density functions,
    established in Theorem~\ref{thm:discrete_density_condition} and
    integrated into both MPC layers to enforce safe predictive control
    for obstacle avoidance and inter-agent separation.

    \item MuJoCo validation of collaborative transport with two and
    three Unitree Go2 quadruped robots, including comparisons against matched
    CBF-MPC~\cite{zeng2021mpccbf} and
    RRT* based tracking baselines~\cite{karaman2011sampling}.
\end{itemize}
}

\begin{figure*}
\centering
\includegraphics[width=0.9\linewidth]{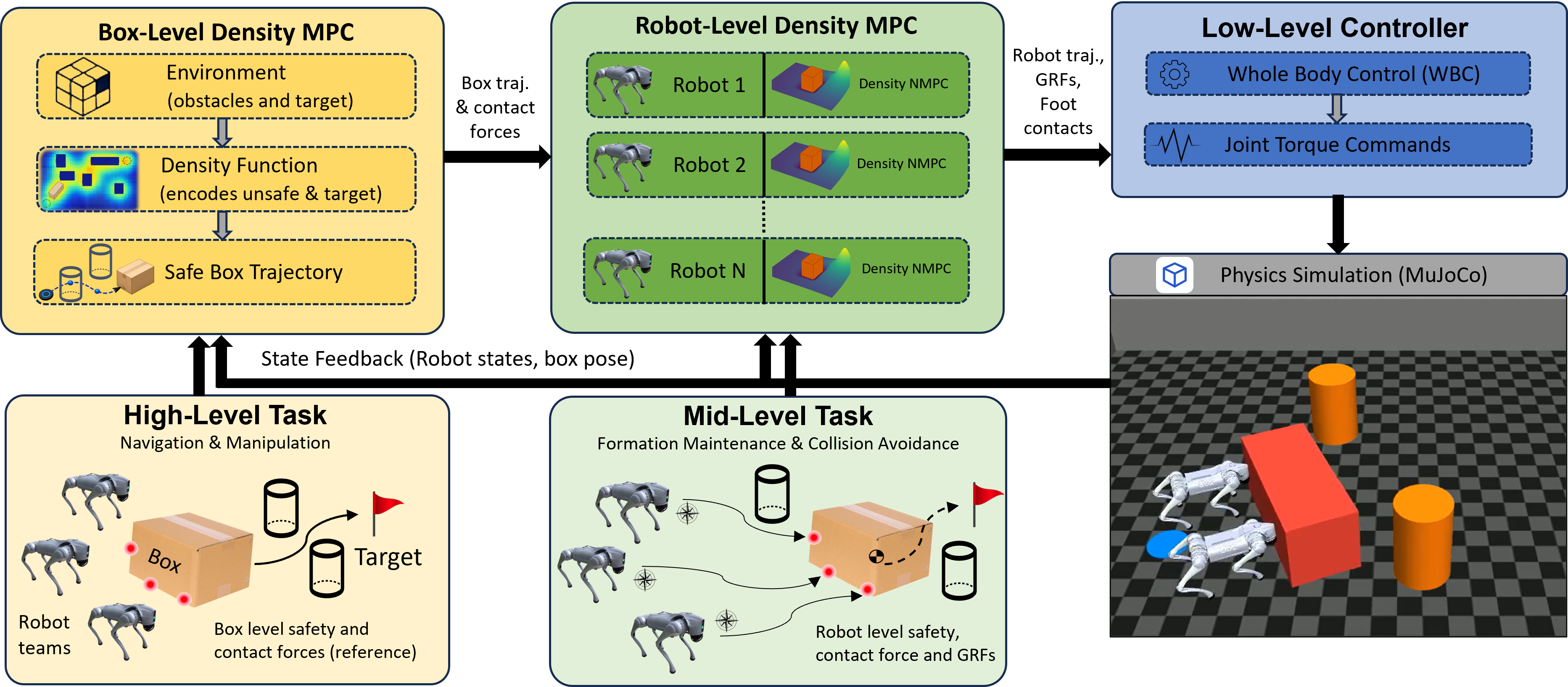}
\caption{Hierarchical architecture of the proposed framework with box-level density MPC and robot-level density MPC.}
\label{fig:architecture}
\end{figure*}


\section{Preliminaries and Problem Formulation}\label{sec:prelims}

\subsection{Notation and Discrete-Time Dynamics}
Let $\mathbb{R}^n$ denote the $n$-dimensional Euclidean space and let
$\mathcal{X}\subset\mathbb{R}^n$ denote a bounded state space. The state and
input of a discrete-time dynamical system are denoted by
$x\in\mathcal{X}$ and $u\in\mathcal{U}\subset\mathbb{R}^m$, respectively. Let $\mathcal{X}_0$, $\mathcal{X}_T$, and $\mathcal{X}_u$ denote the initial,
target, and unsafe sets. The safe set is defined as $\mathcal{X}_s
:=
\mathcal{X}\setminus\mathcal{X}_u.$ The spaces $\mathcal{L}_1(\mathcal{X})$ and
$\mathcal{C}^k(\mathcal{X})$ denote the spaces of integrable functions and
$k$-times continuously differentiable functions on $\mathcal{X}$,
respectively. The notation $m(\mathcal{A})$ denotes the Lebesgue measure of a
measurable set $\mathcal{A}\subset\mathcal{X}$, and
$\mathds{1}_{\mathcal{A}}(x)$ denotes its indicator function. We use
$\mathcal{M}(\mathcal{X})$ to denote the space of measures on $\mathcal{X}$. Let $\mathcal{B}_{\delta}(\mathcal{X}_T)$ denote a $\delta$-neighborhood of
the target set. We define $\mathcal{X}_1
:=
\mathcal{X}\setminus\mathcal{B}_{\delta}(\mathcal{X}_T)$. The forward-difference operator is defined as
$\Delta\rho(x(k),u(k)):=\rho(x(k+1))-\rho(x(k))$.
For a differentiable map $F$, $DF(x)$ denotes its Jacobian
matrix at $x$, and $\det(\cdot)$ denotes the determinant.
The symbols $|\cdot|$ and $\|\cdot\|$ denote the absolute
value and Euclidean norm, respectively

Consider the discrete-time system
\begin{equation}
x(k+1)
=
F(x(k),u(k))
=:
F_{u(k)}(x(k)),
\label{eq:generalDiscreteDynamics}
\end{equation}
where $F:\mathcal{X}\times\mathcal{U}
\rightarrow
\mathcal{X}$ is a continuous state-transition map.

\begin{assumption}[Input-Wise Invertibility]\label{as:invertibility}
For each individually fixed value $u\in\mathcal{U}$ 
the system mapping $F_u:=F(\cdot,u)$ is assumed to be invertible and hence nonsingular on $\cal X$.
\end{assumption}

Assumption~\ref{as:invertibility} is a regularity property of the
dynamics $F$ itself, required below to pull a density back through
$F_u$. The input actually applied to the system, $u(k)$, is in general
time-varying and is generated in closed loop by the receding-horizon
controllers of Section~\ref{sec:density_mpc}.

\subsection{Perron--Frobenius Operator}
Under Assumption~\ref{as:invertibility}, for an invertible differentiable
map $F_u$ evaluated at a particular $u\in\mathcal{U}$ as in
\eqref{eq:generalDiscreteDynamics}, the
Perron--Frobenius (P-F) operator propagates densities
$\varphi\in\mathcal{L}_1(\mathcal{X})$ as
\begin{equation}
[\mathbb{P}_u\varphi](x)
=\varphi(F_u^{-1}(x))
\left|\det D F_u^{-1}(x)\right|.\label{PF}
\end{equation}
where $\varphi$ is the state-space density.
The absolute determinant accounts for local volume changes. The Jacobian determinant accounts for changes in state-space
volume. This density-transport perspective underlies
occupancy-based formulations of safe navigation
\cite{rantzer2004safe,vaidya2018optimal}. In the MPC realization used
here, $u$ is not held fixed over time: $\mathbb{P}_u$ is applied
pointwise, at the particular input realized at each prediction stage, to
obtain the stage-wise density condition \eqref{eq:densitySafetyConstraint}
enforced inside the receding-horizon problems \eqref{eq:objectLevelMPC}
and \eqref{eq:robotLevelMPC}. Because that condition is re-imposed at
every stage of every solve, it constrains the actual, time-varying
closed-loop trajectory, not a hypothetical constant-input one.

\subsection{Safe Set}
Let $h_j:\mathcal{X}\rightarrow\mathbb{R}$ be a continuous safety function
associated with obstacle $j$, where $j=1,\ldots,L$. The individual safe sets
are defined as $\mathcal{S}_j
=
\left\{
x\in\mathcal{X}:h_j(x)\geq0
\right\}.$ The overall safe and unsafe sets are then
\begin{equation*}
\mathcal{S}
=
\bigcap_{j=1}^{L}\mathcal{S}_j,
\qquad
\mathcal{X}_u
=
\mathcal{X}\setminus\mathcal{S}.
\end{equation*}

Equivalently, $\mathcal{S}
=
\left\{
x\in\mathcal{X}:h_j(x)\geq0,\;
j=1,\ldots,L
\right\}.$ The safety functions used at the object and robot levels are constructed
from their respective safety requirements.

\subsection{Control Density Functions}

Control
density functions characterize safe navigation through the
transport of density \cite{rantzer2004safe,moyalan2024robustcdf}.
We use $\rho(x)=\Phi(x)/V(x)^\alpha$, $\alpha>0$, where
$V$ measures goal error and the nonnegative bump function
$\Phi$ vanishes on the unsafe set. Under the regularity and
density-divergence conditions established in
\cite{moyalan2024robustcdf}, this framework provides
almost-everywhere safety and convergence, allowing exceptions
only for a set of initial conditions of zero Lebesgue measure.

\subsection{Box and Robot Dynamics}

Consider a rigid box manipulated by a team of $N$ quadrupedal robots. The box
position and heading are denoted by
$p_b=[p_{b,x},p_{b,y}]^\top\in\mathbb{R}^2$ and
$\psi_b\in\mathbb{R}$, respectively. The corresponding translational and
angular velocities are denoted by
$v_b=[v_{b,x},v_{b,y}]^\top$ and $\omega_b$. The box state is $x_b
=
\begin{bmatrix}
\psi_b &
p_{b,x} &
p_{b,y} &
\omega_b &
v_{b,x} &
v_{b,y}
\end{bmatrix}^{\top}
\in\mathcal{X}_b\subset\mathbb{R}^{6}.$ Each robot applies a nonnegative force along the box body-frame push
direction. The box dynamics are modeled as
\begin{equation}
\begin{aligned}
\dot p_b &= v_b,
\qquad m_b\dot v_b
=
\sum_{i=1}^{N}
R(\psi_b)
\begin{bmatrix}
f_{c,i}\\
0
\end{bmatrix}
+
f_{\mu}(v_b),\\
\dot\psi_b &= \omega_b, \qquad I_b\dot\omega_b
=
-\sum_{i=1}^{N}f_{c,i}d_i,
\end{aligned}
\label{eq:boxDynamics}
\end{equation}
where $m_b$ and $I_b$ are the box mass and planar moment of inertia,
$R(\psi_b)\in SO(2)$ is the planar rotation matrix,
$f_{\mu}(v_b)$ represents the box-ground friction force, and
$d_i$ is the fixed lateral contact offset of robot $i$ from the box
centerline.
The contact-force constraints are $0\leq f_{c,i}\leq f_{c,\max},\,i\in\mathcal{I}_N,$ where $\mathcal{I}_N:=\{1,\ldots,N\}$. After discretization with sampling time $T_s$, the box dynamics are written
as
\begin{equation}
x_b(k+1)
=
F_b\big(x_b(k),f_c(k);d_{1:N}\big),
\label{eq:boxDiscreteDynamics}
\end{equation}
where $f_c(k)
=
\begin{bmatrix}
f_{c,1}(k)&\cdots&f_{c,N}(k)
\end{bmatrix}^{\top}.$ For robot $i\in\mathcal{I}_N$, let
$x_i\in\mathcal{X}_i\subset\mathbb{R}^{n_i}$ denote its whole-body state
and let $u_i\in\mathcal{U}_i\subset\mathbb{R}^{m_i}$ denote its control
input. The robot dynamics are
\begin{equation}
x_i(k+1)
=
F_i\big(x_i(k),u_i(k)\big),
\qquad
u_i(k)\in\mathcal{U}_i.
\label{eq:agentDynamics}
\end{equation}

The contact location assigned to robot $i$ is determined by the current box
state:
\begin{equation}
p_{c,i}(x_b)
=
p_b+
R(\psi_b)
\begin{bmatrix}
0\\
d_i
\end{bmatrix}.
\label{eq:contactPoint}
\end{equation}

Because $d_i$ is fixed, the contact location moves with the box but does not
change independently during the maneuver.

\subsection{Problem Statement}

Consider $N$ robotic agents collaboratively manipulating a shared object in an
environment with static obstacles. Let $\xi_b(k)
=
\begin{bmatrix}
p_b(k)^\top & \psi_b(k)
\end{bmatrix}^{\top}$ denote the object pose, and let
$\xi_b^\star=[(p_b^\star)^\top\ \psi_b^\star]^\top$ be the desired pose.

Given the object and robot dynamics in
\eqref{eq:boxDiscreteDynamics} and \eqref{eq:agentDynamics}, the initial
object and robot states, the obstacle geometry, and the contact geometry,
design object-level contact forces $f_c(k)$ and robot-level feedback laws
\begin{equation}
u_i(k)
=
\pi_i\Big(
x_i(k),x_b(k),\{x_j(k)\}_{j\neq i}
\Big),
\qquad
i\in\mathcal{I}_N,
\label{eq:controlLaw}
\end{equation}
such that the agents collaboratively transport the object to its desired pose
while satisfying
\begin{enumerate}
    \item object convergence: $\xi_b(k)\longrightarrow\xi_b^\star;$
    \item object- and agent-level safety, namely
    $x_b(k)\in\mathcal{S}_b$ and
    $x_i(k)\in\mathcal{S}_i$ for all $i\in\mathcal{I}_N$, while each robot
    tracks its assigned moving contact location $p_{c,i}(x_b)$; and

    \item admissible inputs: $f_c(k)\in\mathcal{F}_c$, $u_i(k)\in\mathcal{U}_i$, $i\in\mathcal{I}_N$.
\end{enumerate}

The proposed hierarchical controller addresses these objectives through an
object-level density MPC for contact-force regulation and a robot-level
density MPC for moving contact-point tracking and obstacle and inter-agent
avoidance.

The feedback law \eqref{eq:controlLaw} lets robot $i$ depend on the states
$\{x_j(k)\}_{j\neq i}$ of every other robot. The next assumption states
precisely how that dependence is realized and fixes the resulting
computational architecture.

\begin{assumption}[Information Availability]\label{as:info}
At each sampling instant $k$, the object state $x_b(k)$ and every
robot state $x_j(k)$, $j\in\mathcal{I}_N$, are available from a single
shared state, so that robot $i$'s controller receives its own state
$x_i(k)$, the object-level MPC's contact-force reference $f_c(k)$ and
predicted contact target $p_{c,i}(x_b)$, and $\{x_j(k)\}_{j\neq i}$, with
zero delay and zero estimation error.
\end{assumption}

Under Assumption~\ref{as:info}, the object-level problem
\eqref{eq:objectLevelMPC} is solved once, centrally, over the joint
contact-force vector $f_c(k)$, while the $N$ robot-level problems
\eqref{eq:robotLevelMPC} are each solved independently, one per robot,
using only the locally available information above. The resulting
architecture is centralized at the object level and distributed at the
robot level.

\begin{remark}[Communication and Consensus]\label{rem:consensus}
\textcolor{black}{The robot-level MPCs are distributed in computation
and coordinated through contact-force and motion references supplied
by the centralized object-level MPC. Since each robot tracks a distinct
contact target, an explicit consensus protocol is not required.
Assumption~\ref{as:info} supplies the teammate states needed for local
prediction and inter-agent constraints, realized here through shared
simulator states. Future work will explore consensus-based state
estimation over local communication networks
\cite{olfati2004consensus,ren2008distributed} and robust density
constraints to accommodate estimation and communication uncertainty
\cite{moyalan2024robustcdf}.}
\end{remark}

\section{Construction of Density Functions}
\label{sec:density_construction}

This section summarizes the density construction used for object-level and
robot-level safety. Following
\cite{moyalan2024robustcdf,narayanan2025mpccdf,rantzer2004safe}, each unsafe
region is represented using an implicit function, converted into a smooth
bump function, and combined with a goal-error function to form a control
density.

\subsection{Unsafe Sets and Smooth Bump Functions}

Let $q\in\mathbb{R}^{n_q}$ denote the position or configuration variables
used for collision checking. An unsafe region associated with obstacle $j$ is
defined by $\mathcal{X}_{u,j}=\left\{q:h_j(q)\leq0\right\},$
where $h_j:\mathbb{R}^{n_q}\rightarrow\mathbb{R}$ is a continuous implicit
safety function. A sensing region is defined using
$s_j:\mathbb{R}^{n_q}\rightarrow\mathbb{R}$ as $\mathcal{X}_{s,j}=\left\{q:s_j(q)\leq0\right\}\setminus\mathcal{X}_{u,j}.$ Following the smooth construction in
\cite{zheng2023safe}, define
\begin{equation*}
f(\tau)
=
\begin{cases}
\exp\left(-\dfrac{1}{\tau}\right), & \tau>0,\\[6pt]
0, & \tau\leq0,
\end{cases}
\label{eq:elementaryFunction}
\end{equation*}
and
\begin{equation*}
\bar{f}(\tau)
=
\frac{f(\tau)}
{f(\tau)+f(1-\tau)}.
\label{eq:smoothStep}
\end{equation*}

The bump function associated with obstacle $j$ is
\begin{equation}
\Psi_j(q)
=
\begin{cases}
0,
& h_j(q)\leq0,\\[6pt]
\displaystyle
\bar{f}\left(
\frac{h_j(q)}
{h_j(q)-s_j(q)}
\right),
& h_j(q)>0,\;s_j(q)\leq0,\\[12pt]
1,
& s_j(q)>0.
\end{cases}
\label{eq:bump}
\end{equation}

Thus, $\Psi_j$ is zero in the unsafe region, transitions smoothly from zero
to one in the sensing region, and equals one outside the sensing region. For
$L$ unsafe regions,
\begin{equation}
\Phi(q)
=
\prod_{j=1}^{L}\Psi_j(q).
\label{eq:aggbump}
\end{equation}

A density function is then constructed as
\begin{equation}
\rho(x)
=
\frac{\Phi(q(x))}
{V(x)^{\alpha}},
\qquad
\alpha>0,
\label{eq:densityFunction}
\end{equation}
where $V(x)$ measures the error relative to the desired target. This density construction has previously been used to synthesize safety controllers by enforcing a continuous-time density-divergence condition, including the safety-QP formulation in \cite{moyalan2024synthesizing}. In particular, we have the following theorem.
\begin{theorem}[\cite{moyalan2024synthesizing}]
\label{theorem_density} For a continuous time control dynamical system $\dot x=f(x,u)$, the almost everywhere safe navigation problem of steering the system from the initial set to final set while avoiding the unsafe set ${\cal X}_u$ if there exists $u=u(x)$ such that $\nabla\cdot(f(x,u)\rho)>0$. 
\end{theorem}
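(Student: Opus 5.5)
The plan is to follow Rantzer's dual Lyapunov argument, in the form used in \cite{moyalan2024synthesizing}. Fix the feedback $u=u(x)$ for which $\nabla\cdot(f(x,u(x))\rho)>0$ on $\mathcal{X}\setminus\mathcal{X}_T$, and write $\bar f(x):=f(x,u(x))$ with flow $\phi_t$. Let $\rho=\Phi/V^\alpha$ be as in \eqref{eq:densityFunction}. Then $\rho\ge 0$, $\rho$ vanishes on $\mathcal{X}_u$ (because $\Phi$ does), and $\rho$ is integrable away from the target. For the last point we take $\alpha$ large enough relative to the growth of $V$, so that $\rho\in\mathcal{L}_1(\mathcal{X}_1)$.

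\textbf{Key identity.} For any measurable $Z\subset\mathcal{X}_1$, Liouville's formula gives
\[
\frac{d}{dt}\int_{\phi_t(Z)}\rho(x)\,dx=\int_{\phi_t(Z)}\nabla\cdot(\bar f\rho)(x)\,dx .
\]
This is obtained by changing variables $x=\phi_t(y)$ and using $\frac{d}{dt}\det D\phi_t=(\nabla\cdot\bar f)\det D\phi_t$. Hence the $\rho$-measure of a transported set strictly increases as long as the set stays outside $\mathcal{B}_\delta(\mathcal{X}_T)$.

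\textbf{Convergence part.} Define the set of initial conditions in $\mathcal{X}_0$ that never enter $\mathcal{B}_\delta(\mathcal{X}_T)$, namely $Z=\{x_0: \phi_t(x_0)\in\mathcal{X}_1\ \forall t\ge0\}$. Suppose, for contradiction, that $m(Z)>0$.
\begin{itemize}
\item Since $\rho>0$ off $\mathcal{X}_u$ and the divergence is strictly positive on $\mathcal{X}_1$, the integral $\int_{\phi_t(Z)}\rho$ is nondecreasing.
\item Integrating the identity gives $\int_0^\infty\!\int_{\phi_t(Z)}\nabla\cdot(\bar f\rho)\,dx\,dt\le\int_{\mathcal{X}_1}\rho<\infty$.
\item On the other hand, the sets $\phi_t(Z)$ are invariant in the sense $\phi_s(Z)\subset Z$ for $s\ge 0$. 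Comparing $\int_{Z}\rho$ with $\int_{\phi_t(Z)}\rho$ therefore yields a strict increase of a bounded quantity over an infinite horizon.
\end{itemize}
Following Rantzer's argument, this forces $\int_{Z}\nabla\cdot(\bar f\rho)=0$, which contradicts positivity unless $m(Z)=0$. So almost every initial condition enters every $\delta$-neighborhood of $\mathcal{X}_T$; taking $\delta\downarrow 0$ along a countable sequence gives convergence almost everywhere.

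\textbf{Safety part.} Suppose a set $Z_u\subset\mathcal{X}_0$ of positive measure reaches $\mathcal{X}_u$ at some time. Since $\rho=0$ on $\mathcal{X}_u$ and $\rho>0$ on $\mathcal{X}_0$ (assuming $\mathcal{X}_0$ is inside the safe region where $\Phi>0$), we get $\int_{\phi_t(Z_u)}\rho\to 0$ along the portion that enters the unsafe set. This contradicts the monotone increase of the $\rho$-measure of transported sets established above. Hence $m(Z_u)=0$.

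\textbf{Main obstacle.} The hardest step is making the measure-theoretic argument rigorous near the boundary of the unsafe set, where $\rho$ is only smooth and not bounded away from zero, and near the target, where $V^{-\alpha}$ blows up. To handle the target I would work on $\mathcal{X}_1$ and use an $\alpha$ large enough to guarantee integrability. To handle the unsafe set I would use the smooth bump construction \eqref{eq:bump}, which makes $\bar f\rho\in\mathcal{C}^1$ so that Liouville's formula applies. Since this is a cited result, in the paper itself I would keep the proof to this outline and refer to \cite{moyalan2024synthesizing,rantzer2004safe} for the details.
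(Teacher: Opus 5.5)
The paper gives no proof of this statement. It is imported from \cite{moyalan2024synthesizing}, and the standard argument behind it is Rantzer's Liouville-type one, which is the route you take. As written, though, your outline has genuine gaps.

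\emph{First}, the statement leaves its hypotheses implicit, and the one you supply is unsatisfiable. Since $\rho=\Phi/V^\alpha$ vanishes identically on $\mathcal{X}_u$, we have $\nabla\cdot(\bar f\rho)=0$ on its interior, so the divergence cannot be strictly positive on all of $\mathcal{X}\setminus\mathcal{X}_T$. The workable hypothesis is $\nabla\cdot(\bar f\rho)\ge 0$ on $\mathcal{X}\setminus\mathcal{X}_T$, strictly positive a.e.\ on $\mathcal{X}_1\setminus\mathcal{X}_u$. With it, your set $Z$ no longer works. As you describe it (initial conditions in $\mathcal{X}_0$) it is not forward invariant, so $\phi_s(Z)\subset Z$ is false. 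The forward-invariant version over all of $\mathcal{X}_1$ can contain a positive-measure set trapped inside an obstacle, where the divergence vanishes and no contradiction arises. Use $Z=\{x_0:\phi_t(x_0)\in\mathcal{X}_1\setminus\mathcal{X}_u\ \forall t\ge 0\}$ together with the safety part.

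\emph{Second}, ``strict increase of a bounded quantity over an infinite horizon'' is not a contradiction. The contradiction occurs at finite $t$: $\phi_t(Z)\subset Z$ and $\rho\ge 0$ give $\int_{\phi_t(Z)}\rho\le\int_Z\rho<\infty$, whereas Liouville gives strict $>$ when $m(Z)>0$.

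\emph{Third}, entering every $\mathcal{B}_\delta(\mathcal{X}_T)$ only yields $\liminf_{t\to\infty}\mathrm{dist}(\phi_t(x_0),\mathcal{X}_T)=0$, not convergence. Trajectories that leave and re-enter $\mathcal{X}_1$ at arbitrarily large times are not in $Z$. Ruling them out requires the divergence inequality along the excursions through $\mathcal{B}_\delta(\mathcal{X}_T)\setminus\mathcal{X}_T$ and a recurrence form of the mass comparison. For example, let $A$ be the set of points of $\mathcal{X}_1\setminus\mathcal{X}_u$ revisited at infinitely many integer times. The first-return map on $A$ is injective, maps $A$ into $A$, and strictly increases $\int\rho$ unless $m(A)=0$. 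A speed bound then passes from integer to continuous times.

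The safety part has a timing gap. Points of $Z_u$ reach $\mathcal{X}_u$ at different times, so no single $\phi_t(Z_u)$ lies in $\mathcal{X}_u$, and ``$\int_{\phi_t(Z_u)}\rho\to 0$'' is not established. The transported set may also pass through $\mathcal{B}_\delta(\mathcal{X}_T)$, outside the region where you set up the identity. One fix is to split $Z_u$ by rational hitting times. A simpler one is the pointwise Liouville identity $\frac{d}{dt}\big[\rho(\phi_t(y))\det D\phi_t(y)\big]=[\nabla\cdot(\bar f\rho)](\phi_t(y))\det D\phi_t(y)\ge 0$. It gives $\rho(\phi_t(y))\ge\rho(y)/\det D\phi_t(y)>0$ for all $t$ whenever $\rho(y)>0$, so such trajectories never reach $\{\rho=0\}\supset\mathcal{X}_u$.

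Finally, a large $\alpha$ is not what gives integrability. On the bounded $\mathcal{X}$, $V$ is bounded away from zero on $\mathcal{X}_1$, so $\rho\in\mathcal{L}_1(\mathcal{X}_1)$ for every $\alpha>0$. Increasing $\alpha$ only sharpens the singularity at $\mathcal{X}_T$, which is handled entirely by excising $\mathcal{B}_\delta(\mathcal{X}_T)$.
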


In this work, we establish a discrete-time safety condition in Theorem~\ref{thm:discrete_density_condition} and incorporate it into the proposed density-MPC framework for collaborative transport.

\begin{figure}
    \centering
    \includegraphics[width=0.8\linewidth]{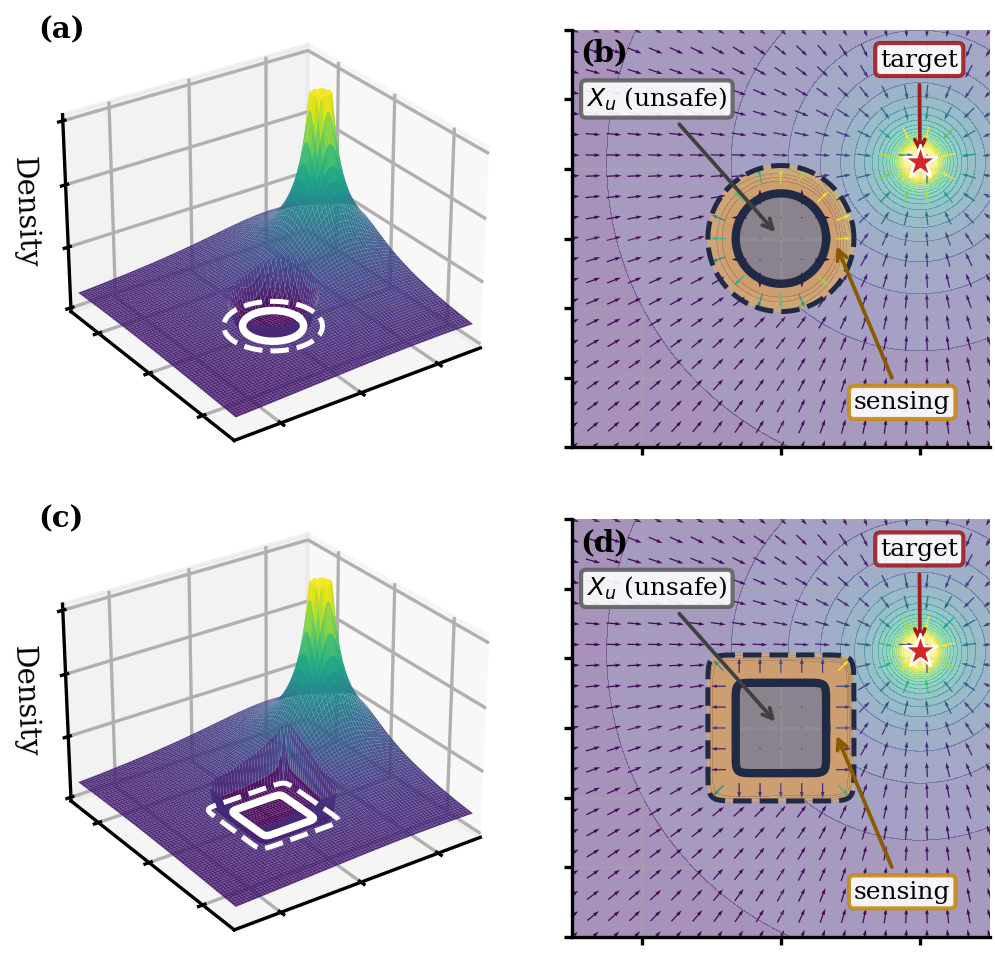}
    \caption{Panels (a) and (b) show the density and corresponding navigation field for a circular region, (c) and (d) show the corresponding construction for a rectangular region.}
\label{fig:geometric_density}
    \label{fig:densit_figure}
\end{figure}



\subsection{Object-Level Density}

At the object level, obstacle geometry is defined with respect to the complete
object footprint. Let $\Phi_b(p_b)$ denote the aggregate bump function
constructed from the footprint-aware obstacle representations. The
object-level density is
\begin{equation}
\rho_b(x_b)
=
\frac{\Phi_b(p_b)}
{V_b(x_b)^{\alpha_b}},
\label{eq:boxDensityConstruction}
\end{equation}
where $V_b(x_b)
=
\|p_b-p_b^\star\|^2
+
w_\theta(\psi_b-\psi_b^\star)^2.$ Here, $(p_b^\star,\psi_b^\star)$ is the desired object pose and
$w_\theta>0$ weights the orientation error. The object-level density
therefore combines goal-directed motion with obstacle avoidance. A cuboidal unsafe region is approximated by the sublevel set
$\{q\in\mathbb{R}^3:h_{\mathrm{cuboid}}(q)\leq0\}$, where
\begin{equation}
h_{\mathrm{cuboid}}(q)
=
\sum_{\ell=1}^{3}
\left|\frac{q_\ell-c_\ell}{a_\ell}\right|^{p}-1.
\label{eq:geometry_examples}
\end{equation}
Here, $c_\ell$ and $a_\ell>0$ denote the center coordinate
and half-length along axis $\ell$, respectively.
Larger even values of $p$ yield a closer approximation
to an axis-aligned cuboid. Figure~\ref{fig:geometric_density}(c)--(d) illustrates the
density and corresponding navigation field for a
two-dimensional rectangular region.


\subsection{Robot-Level Multi-Agent Density}

For robot $i$, let $q_i=\Pi_i x_i$ denote the configuration variables used
for collision checking. Pairwise robot safety is described by $h_{ij}(q_i,q_j) = \|q_i-q_j\|^2-d_{ij}^2,$
where $d_{ij}>0$ is the desired separation between robots $i$ and $j$.
More general pairwise functions can be used for ellipsoidal, rectangular, or
configuration-dependent robot representations.

For a fixed configuration of the neighboring robots, the agent-induced unsafe
set for robot $i$ is
\begin{equation}
\mathcal{X}_{u,i}^{\mathrm{agent}}(k)
=
\bigcup_{j\neq i}
\left\{
q_i:
h_{ij}\big(q_i,q_j(k)\big)<0
\right\}.
\label{eq:agentUnsafeSet}
\end{equation}

Including environmental obstacles, the complete robot-level unsafe set is
\begin{equation}
\mathcal{X}_{u,i}(k)
=
\mathcal{O}_i(k)
\cup
\mathcal{X}_{u,i}^{\mathrm{agent}}(k).
\label{eq:unsafeSet}
\end{equation}

The robot-level density uses the moving contact location as its target:
\begin{equation}
\rho_i(x_i,x_b;k)
=
\frac{\Phi_i(p_i;k)}
{V_i(x_i,x_b)^{\alpha_i}},
\label{eq:robotDensityConstruction}
\end{equation}
where $V_i(x_i,x_b)
=
\|p_i-p_{c,i}(x_b)\|^2.$ The bump function $\Phi_i(p_i;k)$ includes both environmental obstacles and
the time-varying safety envelopes of neighboring robots. Its parameters are
updated online as the multi-agent configuration changes. 

\begin{remark}[Planar Robot-to-Robot Safety]
\label{rem:planarSafety}
In this work, quadrupedal navigation and inter-agent collision checking are
performed in the planar workspace. Each neighboring robot is therefore
represented by a circular safety envelope
\begin{equation*}
\mathcal{B}_j(x_j(k))
=
\left\{
q_i\in\mathbb{R}^2:
\|q_i-p_j(k)\|^2\leq r_{ij}^2
\right\},
\end{equation*}
where $p_j(k)$ is the planar base position of robot $j$ and $r_{ij}$ includes the robot footprint and safety margin.
\end{remark}


\section{Safety Using Discrete-Time Density Constraints}
\label{sec:density_mpc}
Existing density-based safety results, including \cite{moyalan2024robustcdf,moyalan2023convex}, are developed for continuous-time systems and use the divergence condition $\nabla\cdot (f(x,u)\rho(x))> 0$ (Theorem \ref{theorem_density}), with $\rho(x)$ as in \eqref{eq:densityFunction}. Since the NMPC formulation here is discrete-time, Theorem~\ref{thm:discrete_density_condition} derives the corresponding discrete-time condition from this continuous-time divergence condition. 
\begin{theorem}[Discrete-time density safety condition]
\label{thm:discrete_density_condition}
Consider the continuous-time control system
$   \dot{x}=f(x,u),$ and let $\rho\in C^2(\mathcal X)$ be a nonnegative density satisfying $    \nabla\cdot\big(\rho(x)f(x,u)\big)>0.
    \label{eq:continuous_density_condition}
$ Consider its Euler discretization with sampling time $T_s>0$,
$    x^+=F_u(x):=x+T_s f(x,u).
    \label{eq:euler_discretization}
$
Suppose that $F_u$ is nonsingular on the domain of interest. Then,
for sufficiently small $T_s$,
\begin{equation}
    \rho(F_u(x))
    \left|\det DF_u(x)\right|-\rho(x)>0.
    \label{eq:discrete_density_jacobian}
\end{equation}
Equivalently, using the Perron--Frobenius operator $\mathbb P_u$
associated with $F_u$,
$    \rho(y)-[\mathbb P_u\rho](y)>0,
    \;\; y=F_u(x).
    \label{eq:discrete_density_PF}
$
\end{theorem}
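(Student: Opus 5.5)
The plan is a first-order Taylor expansion in $T_s$ of the left-hand side of \eqref{eq:discrete_density_jacobian}, together with a compactness argument that makes the neglected $O(T_s^2)$ terms smaller than the leading term. Fix $u$ and write $F_u(x)=x+T_s f(x,u)$, so that $DF_u(x)=I+T_s D_xf(x,u)$.

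\textbf{Step 1: expand the determinant.} Using the standard expansion $\det(I+T_sA)=1+T_s\,\mathrm{tr}(A)+O(T_s^2)$ with $A=D_xf(x,u)$, we get
\[
\det DF_u(x)=1+T_s\,\nabla\cdot f(x,u)+O(T_s^2).
\]
For $T_s$ small this quantity is positive. Hence the absolute value can be dropped, which is consistent with the nonsingularity hypothesis on $F_u$.

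\textbf{Step 2: expand the density.} Since $\rho\in C^2$, Taylor's theorem gives
\[
\rho(x+T_s f)=\rho(x)+T_s\,\nabla\rho(x)\cdot f(x,u)+O(T_s^2),
\]
where the remainder is controlled by $\tfrac12 T_s^2\|f\|^2\sup\|\nabla^2\rho\|$.

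\textbf{Step 3: combine.} Multiplying the two expansions yields
\[
\rho(F_u(x))\left|\det DF_u(x)\right|-\rho(x)
=T_s\big(\nabla\rho\cdot f+\rho\,\nabla\cdot f\big)+O(T_s^2)
=T_s\,\nabla\cdot(\rho f)+O(T_s^2).
\]
The leading term is $T_s$ times the quantity assumed positive in Theorem~\ref{theorem_density}. Dividing by $T_s$, the sign is therefore positive once $T_s$ is small enough.

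\textbf{Step 4: the PF form.} By \eqref{PF}, $[\mathbb P_u\rho](y)=\rho(F_u^{-1}(y))\,|\det DF_u^{-1}(y)|$. At $y=F_u(x)$ this equals $\rho(x)/|\det DF_u(x)|$. Multiplying the PF inequality by $|\det DF_u(x)|>0$ recovers \eqref{eq:discrete_density_jacobian}, so the two forms are equivalent.

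\textbf{Main obstacle: uniformity.} A single $T_s$ must work on the whole domain of interest, not just pointwise in $x$. I would handle this in two parts.

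\emph{(a) Away from the singular set.} Suppose the domain is compact, the relevant $u$ range over a compact set, and $f$ is $C^1$ in $x$. Then the $O(T_s^2)$ remainders are bounded by $C T_s^2$ with $C$ uniform. The strict inequality gives $\nabla\cdot(\rho f)\ge c>0$ on any compact set where it is continuous, so $T_s<c/C$ suffices there.

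\emph{(b) Near the singular set.} The set where $\rho$ blows up ($V=0$) and the set where $\rho$ vanishes (the unsafe set) need separate treatment. The natural approach is to restrict to a compact set bounded away from $\mathcal{B}_\delta(\mathcal X_T)$, i.e.\ to $\mathcal X_1$, on which $\rho$ is $C^2$ with bounded derivatives. On the unsafe set itself, $\rho\equiv0$ and the inequality is not required, consistent with the almost-everywhere setting. I would state the result with $T_s$ depending on this compact set and on $\delta$.
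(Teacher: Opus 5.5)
Your Steps 1--4 are the paper's proof. You use the same first-order expansions of $\det(I+T_sD_xf)$ and $\rho(x+T_sf)$, the same product giving $T_s\nabla\cdot(\rho f)+O(T_s^2)$, and the same passage to the Perron--Frobenius form via $[\mathbb P_u\rho](F_u(x))=\rho(x)/|\det DF_u(x)|$. This proves the statement in the sense the paper does, pointwise in $x$; the paper never discusses uniformity, and your justification for dropping the absolute value is a small improvement.

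Your uniformity add-on, however, has a gap in part (b). Since $\rho\ge0$ is $C^1$, every zero of $\rho$ is a minimum. So $\nabla\rho=0$, and hence $\nabla\cdot(\rho f)=0$, on $\{\rho=0\}$; by continuity $\nabla\cdot(\rho f)\to0$ as $x\to\partial\mathcal X_u$. Removing only $\mathcal X_u$ from $\mathcal X_1$ therefore leaves a non-compact set on which $\inf\nabla\cdot(\rho f)=0$, and the constant $c$ from (a) does not exist there.

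This is not an artifact of your estimate. Take $\rho=e^{-1/(R-\|x\|)}$ for $\|x\|<R$ (zero otherwise) and $f=\omega(-x_2,x_1)^\top-(R-\|x\|)\,x/\|x\|$ with $\omega\neq0$ on $R/2\le\|x\|<R$. With $r=\|x\|$, a direct computation gives $\nabla\cdot(\rho f)=\frac{\rho}{r}\bigl[\frac{r}{R-r}+2r-R\bigr]>0$. Yet from $r=R-\epsilon$ the Euler step satisfies $\|F_u(x)\|^2=(r-T_s\epsilon)^2+T_s^2\omega^2r^2\to R^2(1+T_s^2\omega^2)>R^2$ as $\epsilon\to0$. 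So for every fixed $T_s$ there are safe points with $\rho(F_u(x))=0$, where the left side of \eqref{eq:discrete_density_jacobian} equals $-\rho(x)<0$.

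A uniform $T_s$ therefore needs a compact subset of the open set where the strict divergence inequality holds. Such a set keeps a positive margin from the zero set of $\rho$, not merely from $\mathcal B_\delta(\mathcal X_T)$. Part (a) should be phrased that way, rather than as ``any compact set where it is continuous.''
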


\begin{proof}
For the Euler map $ x^+=F_u(x):=x+T_s f(x,u)$, we have $
    DF_u(x)=I+T_sD_xf(x,u)$. 
For sufficiently small $T_s$,
\begin{equation}
    \det DF_u(x)
    =
    1+T_s\nabla\cdot f(x,u)+O(T_s^2).
    \label{eq:jacobian_expansion}
\end{equation}
Similarly, Taylor expansion of the density gives
\begin{equation}
    \rho(F_u(x))
    =
    \rho(x)
    +T_s\nabla\rho(x)^\top f(x,u)
    +O(T_s^2).
    \label{eq:density_expansion}
\end{equation}
Multiplying \eqref{eq:jacobian_expansion} and
\eqref{eq:density_expansion} yields
\begin{align}
&\rho(F_u(x))\left|\det DF_u(x)\right|-\rho(x)
\nonumber\\
&\qquad =
T_s\left[
    \nabla\rho(x)^\top f(x,u)
    +\rho(x)\nabla\cdot f(x,u)
\right]
+O(T_s^2)
\nonumber\\
&\qquad =
T_s\nabla\cdot\big(\rho(x)f(x,u)\big)
+O(T_s^2).
\label{eq:density_expansion_final}
\end{align}
Hence, under $\nabla\cdot (\rho(x) f(x,u))>0$ the
right-hand side is positive for sufficiently small $T_s$, establishing
\eqref{eq:discrete_density_jacobian}.
Now let $y=F_u(x)$. By the definition of the Perron--Frobenius
operator,
\begin{align}
    [\mathbb P_u\rho](y)
    =
    \rho(F_u^{-1}(y))
    \left|\det DF_u^{-1}(y)\right| =
    \frac{\rho(x)}
    {\left|\det DF_u(x)\right|}.
\end{align}
Dividing \eqref{eq:discrete_density_jacobian} by
$\left|\det DF_u(x)\right|>0$ gives $
    \rho(y)-[\mathbb P_u\rho](y)>0,
$
which proves the result.
\end{proof}

Motivated by Theorem~\ref{thm:discrete_density_condition}, define
the discrete-time density constraint used in the MPC as
\begin{equation}
    \mathcal D_\rho(x^+,x,u)
    :=
    \rho(x^+)-[\mathbb P_u\rho](x^+)\geq0.
    \label{eq:densitySafetyConstraint}
\end{equation}
At each prediction stage $j$, we write the density constraint as 
\begin{align*}
\mathcal D_{b,j}&:=
\mathcal D_{\rho_b}
(x_{b,k+j+1|k},x_{b,k+j|k},f_{c,k+j|k}),\\
\mathcal D_{i,j}&:=
\mathcal D_{\rho_i}
(x_{i,k+j+1|k},x_{i,k+j|k},u_{i,k+j|k}),
\end{align*}

using the object-level density $\rho_b$ of
\eqref{eq:boxDensityConstruction} and the robot-level density $\rho_i$ of
\eqref{eq:robotDensityConstruction}, respectively. These are exactly
the density-safety constraints $\mathcal D_{b,j}\geq0$ and
$\mathcal D_{i,j}\geq0$ imposed in the object-level MPC
\eqref{eq:objectLevelMPC} and robot-level MPC \eqref{eq:robotLevelMPC}.
For robot $i$, the density is evaluated using the predicted contact
target at each stage and the neighboring-robot safety parameters
updated at time $k$; these dependencies are suppressed for brevity.

\begin{remark}
\label{rem:closed_loop}
\textcolor{black}{Theorem~\ref{thm:discrete_density_condition} converts the
continuous-time density-divergence condition used in
\cite{moyalan2024robustcdf} into a one-step discrete-time condition. At each
prediction stage, the input is held constant over the finite sampling
interval $T_s$ (zero-order hold), defining the one-step flow map.
Integrating the continuous-time condition over this interval yields the
stated Perron--Frobenius inequality. Thus, the fixed input in the theorem
is local to each one-step map; it does not assume a constant input over
the full closed-loop trajectory. In MPC, the density condition is enforced pointwise using the input chosen at each stage and is re-imposed whenever the object-level and robot-level
controllers re-solve.}
\end{remark}


\subsection{Object-Level Density MPC}

The object-level MPC in  \eqref{eq:objectLevelMPC} optimizes the contact-force sequence
$\mathbf{F}_{c,k}=\{f_{c,k+j|k}\}_{j=0}^{N_b-1}$ with cost
\begin{equation}
\begin{aligned}
J_b(\mathbf{F}_{c,k};x_b(k))
&=\sum_{j=0}^{N_b-1}
\ell_b(x_{b,k+j|k},f_{c,k+j|k})\\
&\quad+V_b^f(x_{b,k+N_b|k}),
\end{aligned}
\label{eq:boxMPCcost}
\end{equation}
where $\ell_b$ and $V_b^f$ are the stage and terminal
costs. Concretely,
\begin{equation*}
\begin{aligned}
\ell_b(x_{b,k+j|k},f_{c,k+j|k})
&=R_{fc}\|f_{c,k+j|k}-f_c^{\mathrm{nom}}\|^2\\
&\;+R_{\mathrm{diff}}(\Delta f_{c,k+j|k}-\Delta f_c^{\mathrm{nom}})^2\\
&\;+R_{\dot{fc}}\|f_{c,k+j|k}-f_{c,k+j-1|k}\|^2\\
&\;+R_{\dot{\mathrm{diff}}}(\dot\Delta f_{c,k+j|k})^2
+\tfrac{1}{2}w_sC_{k+j|k}^2,
\end{aligned}
\end{equation*}
with $\Delta f_c:=f_{c,1}-f_{c,2}$, $C_{k+j|k}\geq0$ the density-rate
slack of \eqref{eq:densitySafetyConstraint}. The weights $R_{fc},R_{\mathrm{diff}},R_{\dot{fc}},R_{\dot{\mathrm{diff}}},w_s$ and other MPC related paramteres are listed in Table~\ref{tab:mpc_params}. Here, $\mathbf{1}_n$ denotes the $n$-dimensional vector of ones. The box-level MPC is presented below.
Note that there is no state-tracking term used at
the object-level MPC, and the goal convergence is enforced by the density constraint
$\mathcal{D}_{b,j}\geq0$, and the cost $\ell_b$ regulates
only the contact force sequence.

\par\smallskip
\noindent
\begin{minipage}{\linewidth}
\setlength{\abovedisplayskip}{3pt}
\setlength{\belowdisplayskip}{3pt}
\setlength{\abovedisplayshortskip}{3pt}
\setlength{\belowdisplayshortskip}{3pt}
\setlength{\jot}{1pt}
\hrule height 0.4pt
\vspace{2mm}
\textbf{Box-Level Density MPC:}\\
\vspace{-2mm}
\begin{equation}
\begin{aligned}
\min_{\mathbf{F}_{c,k}}\quad
& J_b(\mathbf{F}_{c,k};x_b(k))\\
\text{s.t.}\quad
& x_{b,k+j+1|k}
  =F_b(x_{b,k+j|k},f_{c,k+j|k};d_{1:N}),\\
& \mathcal{D}_{b,j}\geq0,\\
& 0\leq f_{c,i,k+j|k}\leq f_{c,\max},\\
& x_{b,k|k}=x_b(k).
\end{aligned}
\label{eq:objectLevelMPC}
\end{equation}
\noindent
$j=0,\ldots,N_b-1$, \quad $i\in\mathcal{I}_N$.
\par\smallskip
\hrule height 0.4pt
\end{minipage}
\par\smallskip

The first optimal force vector
$f_c(k)=f_{c,k|k}^{\star}$ supplies the robot interaction-force
references. The predicted object trajectory determines
the moving contact targets
\begin{equation}
p_{c,i,k+j|k}
=p_{b,k+j|k}^{\star}
+R(\psi_{b,k+j|k}^{\star})[0,d_i]^\top.
\label{eq:predictedContactPoint}
\end{equation}

\subsection{Robot-Level Density MPC}

Robot $i$ optimizes
$\mathbf{U}_{i,k}=\{u_{i,k+j|k}\}_{j=0}^{N_i-1}$ with cost
\begin{equation}
\begin{aligned}
J_i(\mathbf{U}_{i,k};x_i(k))
&=\sum_{j=0}^{N_i-1}
\ell_i(x_{i,k+j|k},u_{i,k+j|k},p_{c,i,k+j|k})\\
&\quad+V_i^f(x_{i,k+N_i|k},p_{c,i,k+N_i|k}),
\end{aligned}
\label{eq:robotMPCcost}
\end{equation}
where $\ell_i$ and $V_i^f$ are the stage and terminal
costs for the assigned contact targets. The stage and terminal costs are
\begin{equation*}
\begin{aligned}
\ell_i
&=\|x_{i,k+j|k}-x_{i,\mathrm{ref},k+j|k}\|_{Q_i}^{2}
+\|u_{i,k+j|k}\|_{R_i}^{2},\\
V_i^f
&=\|x_{i,k+N_i|k}-x_{i,\mathrm{ref},k+N_i|k}\|_{Q_i}^{2},
\end{aligned}
\end{equation*}
where $x_{i,\mathrm{ref},k+j|k}$ combines the robot pose reference
associated with the assigned contact location $p_{c,i,k+j|k}$
and the nominal locomotion references for the remaining state
components. The terminal cost uses the same weight $Q_i$
as the stage cost. The robot-level MPC is presented below.
\par\smallskip
\noindent
\begin{minipage}{\linewidth}
\setlength{\abovedisplayskip}{3pt}
\setlength{\belowdisplayskip}{3pt}
\setlength{\abovedisplayshortskip}{3pt}
\setlength{\belowdisplayshortskip}{3pt}
\setlength{\jot}{1pt}
\hrule height 0.4pt
\vspace{2mm}
\textbf{Robot-Level Density MPC:}\\
\vspace{-2mm}
\begin{equation}
\begin{aligned}
\min_{\mathbf{U}_{i,k}}\quad
& J_i(\mathbf{U}_{i,k};x_i(k))\\
\text{s.t.}\quad
& x_{i,k+j+1|k}
  =F_i(x_{i,k+j|k},u_{i,k+j|k}),\\
& \mathcal{D}_{i,j}\geq0,\\
& u_{i,k+j|k}\in\mathcal{U}_i,\\
& x_{i,k|k}=x_i(k).
\end{aligned}
\label{eq:robotLevelMPC}
\end{equation}
\noindent
$j=0,\ldots,N_i-1$.
\par\smallskip
\hrule height 0.4pt
\end{minipage}
\par\smallskip

\begin{remark}[Active Constraint Set for $N>2$ agents]\label{rem:nearestTeammate}
For the two-robot team, \eqref{eq:agentUnsafeSet} involves a single
other robot and is instantiated directly. For the three-robot team, the
robot-level NMPC exposes one inter-agent avoidance parameter slot per
robot. Rather than instantiate all $N-1$ pairwise constraints in
\eqref{eq:agentUnsafeSet} simultaneously, each robot's controller is
given the position of only its nearest other robot at that instant,
re-selected every control step. Every teammate state remains available
under Assumption~\ref{as:info}. Only the active \emph{constraint} is
restricted to the nearest teammate. Instantiating every pairwise
constraint in \eqref{eq:agentUnsafeSet} simultaneously for larger teams
is a straightforward generalization.
\end{remark}

The first optimal input
$u_i(k)=u_{i,k|k}^{\star}$ is applied through the
whole-body controller. At each sampling instant, the controller measures the
object and robot states, solves the object-level MPC,
and generates interaction-force references and predicted
contact targets. Each robot then updates its obstacle
and inter-agent safety parameters, and solves its local
MPC.

\begin{table}[t]
\centering
\caption{MPC parameters for object-level and robot-level controllers.}
\label{tab:mpc_params}
\scriptsize
\renewcommand{\arraystretch}{0.85}
\setlength{\tabcolsep}{3pt}
\begin{tabular*}{\columnwidth}{@{\extracolsep{\fill}}lc@{}}
\toprule
Parameter & Value \\
\midrule
\multicolumn{2}{@{}l}{\textbf{Object-Level Density MPC (box)}} \\
Prediction horizon $N_b$                        & $15$ \\
Discretization interval $T_s$                   & $0.05$~s ($20$~Hz) \\
Box--ground friction $\mu$                      & $0.5$ \\
Contact-force bounds $[f_{c,\min},f_{c,\max}]$  & $[0,45]$~N \\
Density exponent $\alpha$                       & $1.0$ \\
Orientation weight $w_\theta$                   & $0.3$ \\
Density cap $\rho_{\max}$                       & $20.0$ \\
\midrule
\multicolumn{2}{@{}l}{\textbf{Object-Level Cost Weights}} \\
Force tracking $R_{fc}$                         & $1\times10^{-2}$ \\
Force-differential tracking $R_{\mathrm{diff}}$ & $5\times10^{-1}$ \\
Force-rate $R_{\dot{fc}}$                       & $1\times10^{-3}$ \\
Differential-rate $R_{\dot{\mathrm{diff}}}$     & $2\times10^{-2}$ \\
Density-constraint slack penalty                & $1\times10^{4}$ \\
\midrule
\multicolumn{2}{@{}l}{\textbf{Robot-Level Density NMPC (per robot)}} \\
Prediction horizon $N_i$                        & $12$ \\
Discretization interval                         & $0.02$~s \\
Ground friction $\mu$                           & $0.5$ \\
Per-leg GRF bounds $[f_{\min},f_{\max}]$        & $[0,147.3]$~N \\
\midrule
\multicolumn{2}{@{}l}{\textbf{Robot-Level Cost Weights}} \\
State tracking $\mathrm{diag}(Q_i)$ &
\makecell[l]{
$(0,0,1500,\,200\mathbf{1}_3^\top,\,500,500,0,$\\
$\quad 20,20,50,\,300\mathbf{1}_3^\top,\,50,\,10\mathbf{1}_5^\top)$
} \\
Input regulation $\mathrm{diag}(R_i)$ &
$(10^{-4},10^{-4},10^{-5},\,10^{-3}\mathbf{1}_3^\top)$ \\
\midrule
\multicolumn{2}{@{}l}{\textbf{Physical Parameters}} \\
Box mass, dimensions                            & $5$~kg, $0.5\!\times\!1.2\!\times\!0.5$~m \\
Robot mass (Unitree Go2)                        & $15.019$~kg \\
\bottomrule
\end{tabular*}
\end{table}


\begin{figure*}[t]
\centering
\begin{subfigure}[t]{\textwidth}
    \centering
    \includegraphics[width=\linewidth]
        {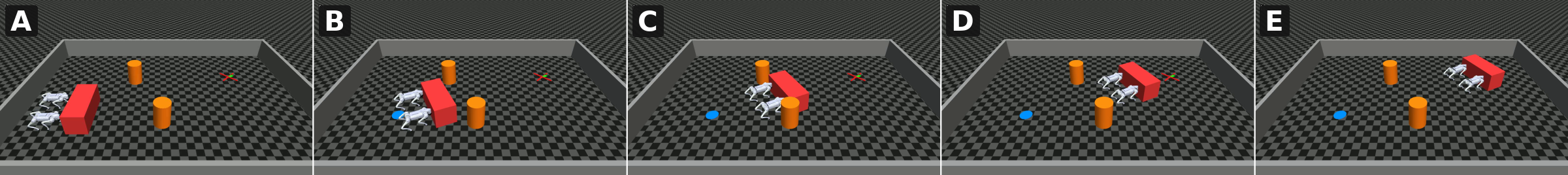}
    \caption{Two quadrupeds transporting a rectangular box.}
\end{subfigure}

\smallskip
\begin{subfigure}[t]{\textwidth}
    \centering
    \includegraphics[width=\linewidth]
        {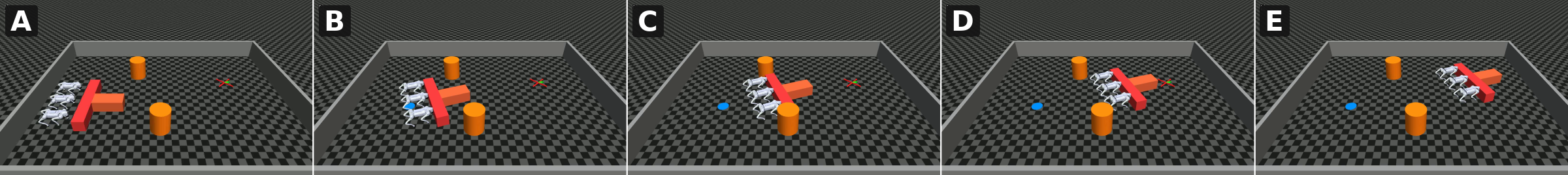}
    \caption{Three quadrupeds transporting a T-shaped object.}
\end{subfigure}
\caption{Collaborative manipulation using the proposed density MPC framework evaluated in MuJoCo, from the initial
configuration (A) to the goal configuration (E).}
\label{fig:filmstrip}
\end{figure*}

\section{Multi Robot Collaborative Transport}\label{sec:main_results}

We evaluate the proposed framework in MuJoCo using articulated
Unitree Go2 models with simulated robot--ground and robot--object
contacts. Two scenarios are considered: two robots pushing a
$0.5\times1.2\times0.5$-m, $5$-kg rectangular box through a
$1.90$-m gap, and three robots pushing a heavier T-shaped object
through a $2.50$-m gap. In both scenarios, two circular obstacles
form a narrow passage within a bounded workspace. The controller
combines the object-level and robot-level MPC formulations
in \eqref{eq:objectLevelMPC} and \eqref{eq:robotLevelMPC}, using
fixed contact offsets without a prescribed object trajectory. Performance is evaluated over 10 trials with random initial-pose
perturbations of up to $\pm0.05$~m in position and $\pm3^\circ$
in orientation. Evaluation metrics are goal attainment, simulated
completion time, and minimum recorded object--obstacle clearance,
measured from the rotated object footprint to the physical
obstacle boundaries.

Simulations run on an AMD Ryzen 9 5900X processor with a MuJoCo
integration step of $2$~ms. The object-level MPC is implemented
in CasADi and solved using IPOPT, with a $15$-step prediction
horizon and a $0.05$-s discretization interval. Each robot-level
MPC uses acados with sequential quadratic programming (SQP),
a $12$-step prediction horizon, and a $0.02$-s discretization
interval. The object- and robot-level controllers update at
$20$ and $100$~Hz, respectively, in simulation time.
Additional parameters are listed in Table~\ref{tab:mpc_params}.

Both teams successfully transport the object through the passage
and reach the desired pose
(Figs.~\ref{fig:filmstrip} and \ref{fig:comparison}).
Across 10 trials, the proposed controller achieves full task success,
with completion times of $38.6\pm0.8$~s for box pushing and
$28.5\pm1.0$~s for T-shape pushing. The representative trajectories
complete in $38.50$ and $28.92$~s, with minimum recorded
object--obstacle clearances of $0.02$ and $0.04$~m, respectively,
indicating separation throughout the sampled trajectories (Table~\ref{tab:baseline_compare}). The results demonstrate coordinated transport from a desired
object pose without a prescribed trajectory. The object-level MPC
\eqref{eq:objectLevelMPC} generates the motion, while the moving
contact locations \eqref{eq:predictedContactPoint} provide tracking
references for the robot-level controllers. Both tested team sizes
and object geometries use the same hierarchical formulation
with scenario-specific parameters.

With solver reuse and state warm-starting, Successful object-level MPC solves took
$15.55 \pm 2.39$~ms for box pushing and $16.98 \pm 2.25$~ms
for T-shape pushing, across 449 and 377 successful solves,
respectively. The corresponding per-robot MPC computation times
were $1.54 \pm 0.26$~ms and $1.47 \pm 0.25$~ms for the
two-robot team, and $1.56 \pm 0.24$~ms, $1.50 \pm 0.24$~ms,
and $1.50 \pm 0.23$~ms for the three-robot team. The measured computation times support the target implementation
with object-level MPC at 20~Hz and robot-level MPC at 100~Hz.

\begin{figure}
    \centering
    \includegraphics[width=\linewidth]{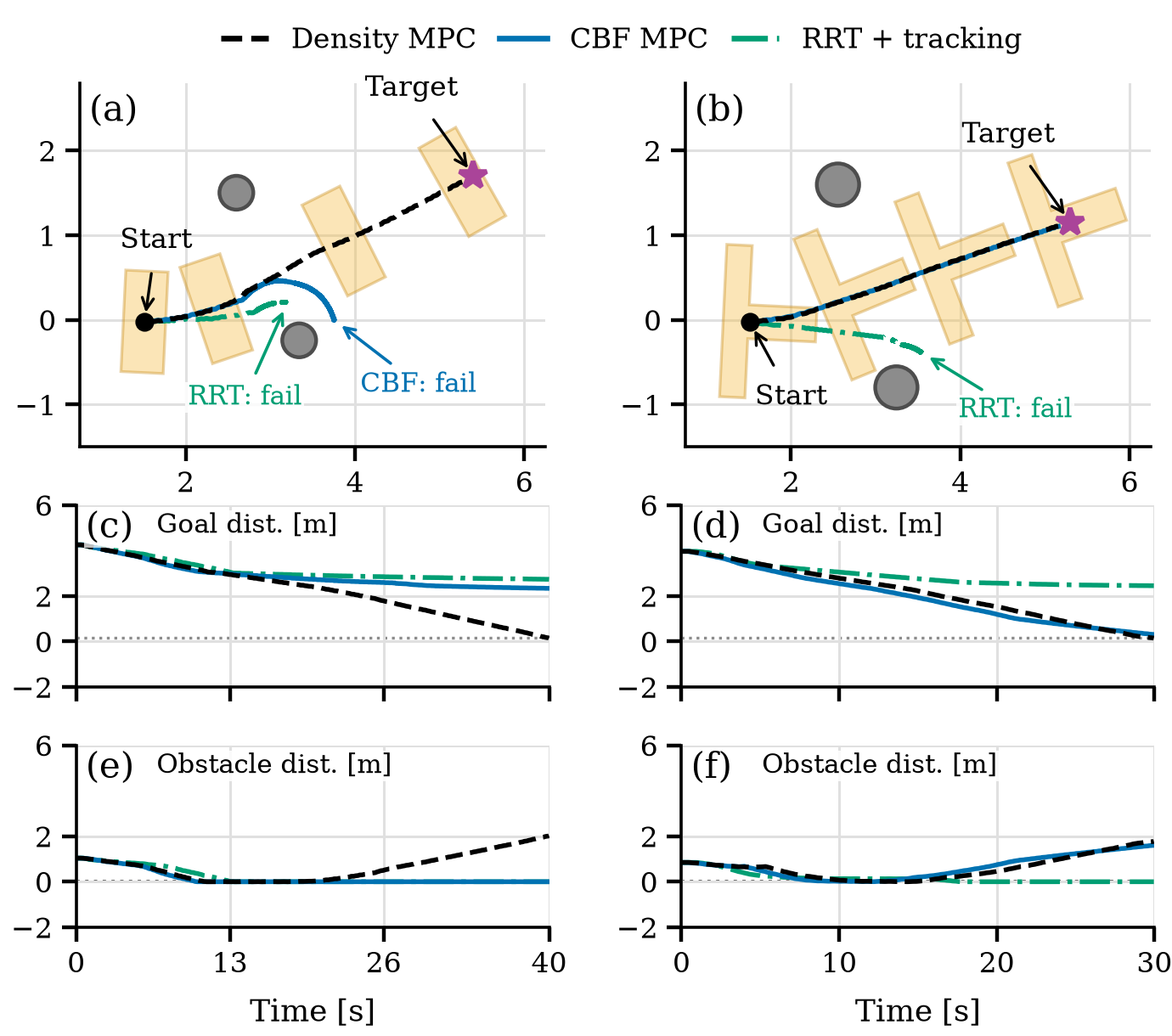}
    \caption{Comparison of Density MPC, CBF MPC, and RRT with tracking for box (left) and T-shaped object (right) pushing: (a,b) object trajectories; (c,d) distance to goal; (e,f) minimum object–obstacle clearance.}
    \label{fig:comparison}
\end{figure}

\begin{table}[t]
\centering
\caption{Baseline comparisons.}
\label{tab:baseline_compare}
\scriptsize
\setlength{\tabcolsep}{3pt}
\renewcommand{\arraystretch}{0.8}
\begin{tabular*}{\columnwidth}{@{\extracolsep{\fill}}lccc@{}}
\toprule
Method & Success & Completion Time (s) & Min. Clearance (m) \\
\midrule
\multicolumn{4}{@{}l}{\textit{Box pushing ($n{=}2$)}} \\
\textbf{Density MPC} & $10/10$ & $38.6\pm0.8$ & $0.006\pm0.009$ \\
CBF MPC~\cite{zeng2021mpccbf} & $5/10$ & $48.0\pm0.7$ & $0.000\pm0.002$ \\
RRT*+track~\cite{karaman2011sampling} & $0/10$ & -- & $-0.001\pm0.000$ \\
\midrule
\multicolumn{4}{@{}l}{\textit{T-shape pushing ($n{=}3$)}} \\
\textbf{Density MPC} & $10/10$ & $28.5\pm1.0$ & $0.040\pm0.017$ \\
CBF MPC & $10/10$ & $31.1\pm0.8$ & $0.049\pm0.022$ \\
RRT*+track & $0/10$ & -- & $-0.001\pm0.000$ \\
\bottomrule
\end{tabular*}
\end{table}

\textbf{Comparison with Baselines.} We compare density MPC with CBF MPC and RRT* based tracking MPC using the same dynamics and MPC setup. CBF MPC replaces the object-level density constraint with obstacle CBF constraints in the object-level and robot-level MPC (decay parameter, $\gamma=0.7$, with all other parameters fixed), while RRT* plans through inflated obstacles and provides a pure-pursuit reference with density avoidance disabled. RRT* computes a single collision-free path offline (step size $0.4$~m, goal-sampling bias $0.15$, up to $4000$ iterations, goal tolerance $0.25$~m, obstacles inflated by the box footprint plus a $0.05$~m safety pad), smoothed with random shortcutting; the resulting path is converted into a moving reference via pure-pursuit tracking (lookahead $1.0$~m) and executed by the same robot-level MPC controllers, with the density constraint disabled. In the deterministic runs, density MPC reduces completion time relative to CBF MPC by $22.5\%$ for box pushing and $6.3\%$ for T-shape pushing, while maintaining comparable or larger clearance (Table~\ref{tab:baseline_compare}). Density MPC succeeds in all trials, whereas CBF MPC succeeds in $5/10$ box-pushing trials and $10/10$ T-shape trials, indicating fragility only in the tighter box-pushing passage (Fig.~\ref{fig:comparison}). RRT* based tracking reduces the initial goal distance by only $38\%$ and $41\%$ before stalling because of heading drift and diminishing pushes, reflecting the mismatch between geometric reference tracking and push-only, force--torque-coupled actuation; it fails to reach the goal in both scenarios (see Fig.~\ref{fig:comparison}).

\textbf{Effect of Gait Synchronization.} To assess gait synchronization, we compare the default phase-locked
gaits with a half-cycle offset.
Push-force variability, measured
as $\mathrm{CV}_F=100\,\sigma_F/\mu_F$ (\%), increased from
$78.1\%$ to $84.5\%$, while mean step-to-step force change rose
from $13.09$ to $13.89$~N. Completion time changed little
($38.50$ vs.\ $37.90$~s), with no failures observed in these tests.
Thus, phase locking modestly improved force smoothness but was
not essential for successful pushing in this scenario. This suggests that phase locking helps coordinate contact loading, although broader tests are needed to determine its benefits.
This may be more consequential for a shared payload
carried atop the robots, where gait-induced relative motion
could affect payload stability.

\textbf{Limitations and Future Work.} 
Validation is limited to simulated box and T-shape pushing
with two and three robots, fixed contacts, and planar safety
envelopes. Larger teams require evaluation of computational
scalability, while arbitrary object geometries and complex
or dynamic environments require richer collision representations
beyond the current bounding-size margin. Future work will quantify minimum robot-envelope clearances,
account for estimation and geometric uncertainty near small
clearance margins, and validate contact-force tracking and
real-time execution on hardware.


\section{Conclusions}
\label{sec:conclusions}

We presented a hierarchical density-constrained MPC framework for multi-quadruped collaborative pushing without a prescribed object trajectory. Object-level contact-force optimization is coupled with robot-level tracking and density-based obstacle and inter-agent constraints. MuJoCo experiments with box and T-shape pushing demonstrated goal attainment with positive recorded object-footprint clearances. Compared with matched CBF MPC, density MPC reduced completion time by $22.5\%$ and $6.3\%$ in the two scenarios, while RRT* based tracking MPC failed to complete either task. The current validation is limited to simulated two- and three-robot teams, fixed contacts, and planar safety envelopes; future work will address computational scalability, dynamic environments, uncertainty, and real-time hardware validation.

\bibliographystyle{IEEEtran}
\bibliography{ref}

\end{document}